\let\oldnl\nl
\newcommand{\nonl}{\renewcommand{\nl}{\let\nl\oldnl}}
\crefname{assumption}{assumption}{assumptions}
\newtheorem{theorem}{Theorem}
\newtheorem{proposition}{Proposition}
\newtheorem{lemma}{Lemma}
\newtheorem{definition}{Definition}
\newtheorem{problem}{Problem}
\newtheorem{assumption}{Assumption}
\newtheorem{remark}{Remark}
\newcommand\notsotiny{\@setfontsize\notsotiny\@vipt\@viipt}
\newcommand{\removelatexerror}{\let\@latex@error\@gobble}
\DeclareMathOperator*{\argmin}{\arg\!\min}
\renewcommand{\H}[2][]{
\ifthenelse {\equal{#1}{}}
{\mathbb{H}\left[#2\right]}
{\mathbb{H}_{#1}\left[#2\right]}}
\newcommand{\E}[2][]{
\ifthenelse {\equal{#1}{}}
{\mathbb{E}\left[#2\right]}
{\mathbb{E}_{#1}\left[#2\right]}}
\newcommand{\norm}[1]{\left\lVert#1\right\rVert}
\def\Algref#1{Algorithm~\ref{#1}}
\def\1{\bm{1}}
\def\va{{\bm{a}}}
\def\vb{{\bm{b}}}
\def\vc{{\bm{c}}}
\def\vd{{\bm{d}}}
\def\vf{{\bm{f}}}
\def\vg{{\bm{g}}}
\def\vh{{\bm{h}}}
\def\vm{{\bm{m}}}
\def\vq{{\bm{q}}}
\def\vs{{\bm{s}}}
\def\vw{{\bm{w}}}
\def\vh{{\bm{h}}}
\def\mC{{\bm{C}}}
\def\mG{{\bm{G}}}
\def\mQ{{\bm{Q}}}
\def\mV{{\bm{V}}}
\DeclareMathAlphabet{\mathsfit}{\encodingdefault}{\sfdefault}{m}{sl}
\SetMathAlphabet{\mathsfit}{bold}{\encodingdefault}{\sfdefault}{bx}{n}
\def\gB{{\mathcal{B}}}
\def\gC{{\mathcal{C}}}
\def\gE{{\mathcal{E}}}
\def\gF{{\mathcal{F}}}
\def\gG{{\mathcal{G}}}
\def\gL{{\mathcal{L}}}
\def\gM{{\mathcal{M}}}
\def\gO{{\mathcal{O}}}
\def\gP{{\mathcal{P}}}
\def\gR{{\mathcal{R}}}
\def\gU{{\mathcal{U}}}
\def\gV{{\mathcal{V}}}
\def\sN{{\mathbb{N}}}
\def\sP{{\mathbb{P}}}
\def\sR{{\mathbb{R}}}
\newcommand{\defi}{\;{:=}\;}
\newcommand{\StatexIndent}[1][3]{%
  \setlength\@tempdima{\algorithmicindent}%
  \Statex\hskip\dimexpr#1\@tempdima\relax}
\newacronym{poe}{PoE}{product of experts}
\newacronym{kld}{KL divergence}{Kullback–Leibler Divergence}
\newacronym{kl}{KL}{Kullback–Leibler}
\newacronym{map}{MAP}{maximum a Posteriori}
\newacronym{rmp}{RMP}{Riemannian Motion Policies}
\newacronym{gmm}{GMM}{Gaussian Mixture Model}
\newacronym{mpc}{MPC}{Model Predictive Control}
\newacronym{gp}{GP}{Gaussian Process}
\newacronym{ot}{OT}{Optimal Transport}
\newacronym{dmdp}{DMDP}{Deterministic Markov Decision Process}
\newacronym{mdp}{MDP}{Markov Decision Process}
\newacronym{vi}{VI}{Value Iteration}
\newacronym{dof}{DoF}{degrees of freedom}
\newacronym{sdf}{SDF}{Signed Distance Field}
\newacronym{mpot}{MPOT}{Motion Planning via Optimal Transport}
\newacronym{chomp}{CHOMP}{Covariant Hamiltonian Optimization for Motion Planning}
\newacronym{gpmp}{GPMP}{Gaussian Process Motion Planning}
\newacronym{stomp}{STOMP}{Stochastic Trajectory Optimization for Motion Planning}
\newacronym{sgpmp}{SGPMP}{Stochastic Gaussian Process Motion Planning}
\newacronym{bps}{BPS}{Batch Polytope Search}
\newcommand{\citet}{\cite}
\title{
Global Tensor Motion Planning
}
\author{\small An T. Le$^{1}$, Kay Hansel$^{1}$, Jo\~{a}o Carvalho$^{1}$, Joe Watson$^{1,2}$, Julen Urain$^{1,5}$, Armin Biess, Georgia Chalvatzaki$^{1,5}$ and Jan Peters$^{1,2,3,4}$
\thanks{Manuscript received: 31.12.2024; Revised 10.04.2025; Accepted 20.05.2025.}
\thanks{This paper was recommended for publication by Editor Júlia Borràs Sol upon evaluation and Reviewers' comments.}
\thanks{
Corresponding author: An T. Le, \href{an@robot-learning.de}{an@robot-learning.de}
}
\thanks{
$^{1}$Intelligent Autonomous Systems Lab, TU Darmstadt, Germany;
$^{2}$German Research Center for AI (DFKI); 
$^{3}$Hessian.AI;
$^{4}$Centre for Cognitive Science.
$^{5}$Interactive Robot Perception \& Learning Lab, TU Darmstadt, Germany
}
\thanks{Digital Object Identifier (DOI): 10.1109/LRA.2025.3575307}
}
\begin{document}

\begin{titlepage}
\quad\\[1cm]
\makeatother
	{\Huge IEEE Copyright Notice}\\[0.5cm]
	{\begin{spacing}{1.2}
	\large \copyright \ 2025 IEEE. Personal use of this material is permitted. Permission from IEEE must be obtained for all other uses, in any current or future media, including reprinting/republishing this material for advertising or promotional purposes, creating new collective works, for resale or redistribution to servers or lists, or reuse of any copyrighted component of this work in other works.
	\end{spacing}}
\end{titlepage}

\markboth{IEEE ROBOTICS AND AUTOMATION LETTERS. PREPRINT VERSION. Accepted May, 2025}
{Le \MakeLowercase{\textit{et al.}}: Global Tensor Motion Planning} 

\maketitle

\begin{abstract}

Batch planning is increasingly necessary to quickly produce diverse and quality motion plans for downstream learning applications, such as distillation and imitation learning. This paper presents Global Tensor Motion Planning (GTMP)---a sampling-based motion planning algorithm comprising only tensor operations. We introduce a novel discretization structure represented as a random multipartite graph, enabling efficient vectorized sampling, collision checking, and search. We provide a theoretical investigation showing that GTMP exhibits probabilistic completeness while supporting modern GPU/TPU. Additionally, by incorporating smooth structures into the multipartite graph, GTMP directly plans smooth splines without requiring gradient-based optimization. Experiments on lidar-scanned occupancy maps and the MotionBenchMarker dataset demonstrate GTMP's computation efficiency in batch planning compared to baselines, underscoring GTMP's potential as a robust, scalable planner for diverse applications and large-scale robot learning tasks.

\end{abstract}
\begin{IEEEkeywords}
Motion and Path Planning, Manipulation Planning
\end{IEEEkeywords}


\section{Introduction}  \label{sec:intro}






Motion planning with probabilistic completeness has been a foundation of robotics research, with seminal works like PRM~\cite{kavraki1996probabilistic} and RRTConnect~\cite{kuffner2000rrt} serving as cornerstone methods for years~\cite{latombe2012robot}. However, as the complexity of robotic tasks increases, there is a growing demand for batch-planning methods. Several factors drive this interest: (i) the need to gather large datasets for policy learning~\cite{carvalho2023mpd, wang2021survey, reuss2023goal}, (ii) the inherent non-linearity of task objectives that lead to multiple viable solutions~\cite{le2024accelerating, mukadam2018continuous, osa2020multimodal}, and (iii) the increasing availability of powerful GPUs/TPUs for accelerated planning~\cite{bhardwaj2022storm, sundaralingam2023curobo}. Despite these advances, batching traditional sampling-based planners, such as RRT/PRM and their variants, remains an ongoing challenge~\cite{pan2012gpu, bialkowski2011massively, blankenburg2020towards, jacobs2012scalable}. Their underlying discretization techniques, such as the incremental graph construction of RRT/PRM or the search mechanism of A*~\cite{hart1968astar, russell2016artificial}, are not conducive to efficient vectorization over planning instances.

This paper revisits classical motion planning, where we plan from a single start configuration to multiple goal configurations. We introduce a simple yet effective discretization structure with layers of waypoints, which can be represented as tensors, enabling GPU/TPU utilization.
We propose Global Tensor Motion Planning (GTMP), which enables highly batchable operations on multiple planning instances, such as batch collision checking and batch~\gls{vi}, inducing an easily vectorizable implementation with JAX~\cite{jax2018github}. This simplicity allows for differentiable planning and rapid integration with modern frameworks, making the algorithm particularly desirable for real-time applications and scalable data collection for robot learning. Our experimental results demonstrate much better batch efficiency planning than standard baseline implementations while achieving similar smoothness and better path diversity with the spline discretization structure. 

Our contributions are twofold: i) we propose a \textit{vectorizable sampling-based planner} exhibiting probabilistic completeness, which does not require simplification routines~\cite{sucan2012open}, and ii) we extend GTMP with a spline discretization structure, enabling batch spline planning with path quality comparable to trajectory optimizers.

\begin{figure*}[th]
    \centering
    \includegraphics[width=\textwidth]{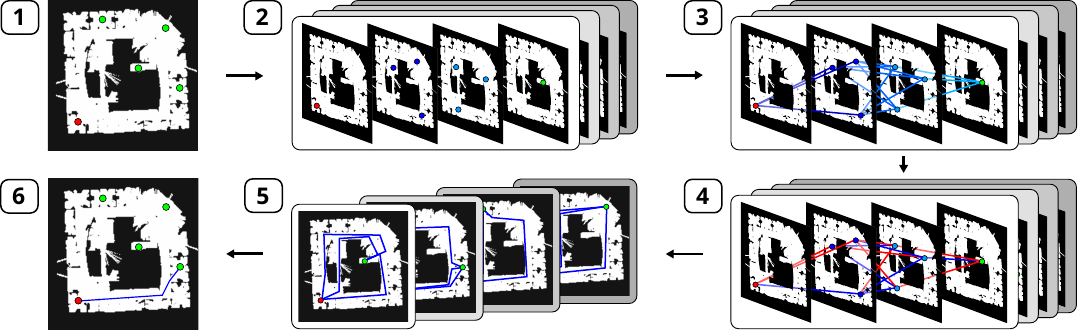}
    \vspace{-0.5cm}
    \caption{GTMP can plan with multiple goals or \texttt{vmap} over goals. For clarity, we present an example of performing JAX \texttt{vmap} on GTMP (M=2, N=3) over the batch of $B=3$ seeds. \textbf{(1)} The objective is to find a batch of feasible paths from the start (red) to the goals (green). \textbf{(2, 3)} In each seed, we sample a multipartite graph and form a tensor (\Algref{alg:gtmp}, Line 1). \textbf{(4)} A batch of collision checks is performed and stored into cost matrices (\Algref{alg:gtmp}, Line 2). \textbf{(5)} Then, per seed, we execute finite value iterations (\Algref{alg:gtmp}, Line 5-7) and trace the optimal path from the optimal value matrices (\Algref{alg:gtmp}, Line 8-13). \textbf{(6)} For execution, we can select the best path in terms of exemplary shortest path criteria. More information can be found on \url{https://sites.google.com/view/gtmp}.}
    \label{fig:method}
\end{figure*}

\section{Related Works} \label{sec:related_works}

Vectorizing motion planning has been an active research topic for decades. Here, we briefly survey the most relevant works on vectorizing either at the \textit{algorithmic}-level (e.g., collision-checking) or \textit{instance}-level (e.g., batch trajectory planning).

\textbf{Sampling-based Vectorization.} Recognizing the importance of planning parallelization, earliest works \cite{amato1999probabilistic, plaku2005sampling, bialkowski2011massively, jacobs2012scalable, pan2012gpu} propose a \textit{vectorizable} collision-checking data structure. State-of-the-art work on leveraging CPU-based \textit{single instruction, multiple data}~\cite{thomason2024motions} (i.e., VAMP) has pushed collision-checking efficiency to \textit{microseconds}. In a different vein, a body of works \cite{gammell2020batch, janson2015fast, strub2020adaptively, wang2020neural, yu2021reducing, ichter2018learning} proposes a learning heuristic or batch-sampling strategies to inform or refine the search-graph with new samples, effectively reducing collision checking.
Despite the hardware or algorithmic acceleration efforts, past works still resort to discretization structures such as trees for RRT variants or graphs for PRM variants \cite{orthey2023sampling}, which are unsuitable for instance-level vectorization. 

\textbf{Vectorizing Trajectory Optimization.} Vectorizing optimization-based planner with GPU-acceleration~\cite{adajania2022multi, lambert2020stein, bhardwaj2022storm, urain2022sgmpg, le2024accelerating} gained traction recently due to their computational efficiency, the solutions' multi-modality, and their robustness to bad local-minima. However, these local methods are sensitive to initial conditions and may get stuck in large infeasible regions, thereby the need for warmstarting the sampling-based global solutions~\cite{sundaralingam2023curobo}. GTMP addresses this issue by proposing a layerwise discretization structure, enabling vectorization in sampling and search operations while having better global solutions.

 






\section{Tensorizing Motion Planning} \label{sec:method}

We consider the path planning problem~\cite{lavalle2006planning} for a configuration $\vq$ in compact space $\vq\in \gC \subset \sR^d$ having $d$-dimensions, with $\gC_{\textrm{coll}}$ being the collision space such that $\gC \setminus \gC_{\textrm{coll}}$ is open. Let $\gC_{\textrm{free}} = \mathsf{Cl}(\gC \setminus \gC_{\textrm{coll}})$ be the free space, with $\mathsf{Cl}(\cdot)$ the set closure. Denote the start configuration $\vq_0$ and a set of goal configurations $\gG$. Let $f: [0, 1] \rightarrow \vq,\,\vf(t) \in \gC$, we can define its total variation as its arc length
\begin{equation} \label{eq:path_tv}
    \textrm{TV}(f) = \sup_{M \in \sN,0=t_0,\ldots,t_M=1} \textstyle\sum_{i=1}^M \norm{\vf(t_i) - \vf(t_{i-1})},
\end{equation}
\begin{definition}[Feasible Path]
    The function $f: [0, 1] \rightarrow \vq$ with $\textrm{TV}(f) < \infty$ is 
    \begin{itemize}
        \item a path, if it is continuous.
        \item a feasible path, if and only if $\forall t \in [0, 1], \vf(t) \in \gC_{\textrm{free}},\, \vf(0) = \vq_0,\, \vf(1) \in \gG$.
    \end{itemize}
\end{definition}
\noindent Let $\gF$ be the set of all paths. We denote $\gF_{\textrm{free}}$ as the set of feasible paths for a feasible planning problem. Here, we do not consider dynamic constraints, invalid configurations that violate collision constraints, and configuration limits. 
\begin{problem}[Batch Path Planning]
    Given a planning problem $(\gC_{\textrm{free}}, \vq_0, \gG)$ and cost function $c: \gF \rightarrow \sR_{>0}$, find a batch of $B > 0$ feasible path $f$ and report failure if no feasible path exists.
\end{problem}
\noindent This problem definition is standard for several robotic settings, such as serial manipulators with joint limits. We propose to solve Problem 1 with probabilistic completeness, striving to discover multiple solution modes.

\textbf{Practical Motivation.} In essence, GTMP leverages a fixed discretization structure over the whole search space, represented by fixed-shape tensors, to enable efficient planning vectorization with JAX \texttt{vmap} operation~\cite{jax2018github}. This approach contrasts with the incremental discretization structures of classical motion planning algorithms, which procedurally expand the search space during planning. 

\subsection{Discretization Structure}
We introduce the random multipartite graph as a novel configuration discretization structure designed to represent planning problems as tensors.

\begin{definition}[Random Multipartite Graph Discretization] \label{def:graph}
    Consider a geometric graph $G = (\gV, \gE)$ on configuration space $\gC$, the node set $\gV$ is represented by $\{\vq_{\textrm{s}}, \gM, \gG\}$, where $\gM = \{\gL_m\}_{m=1}^M$ is a set of $M$ layers. Each layer $\gL_m = \{\vq_i \in \gC \mid \vq_i \sim p_m\}_{i=1}^N$ contains $N$ waypoints sampled by an associated proposal distribution $p_m$ on $\gC$. The edge set $\gE$ is defined by the union of (forward) pair-wise connections between the start and first layer $\{ (\vq_{\textrm{s}}, \vq) \mid \forall\vq \in \gL_1 \},$ between layers in $\gM$
    \begin{equation}
        \{ (\vq_m, \vq_{m+1}) \mid \forall \vq_m \in \gL_m,\ \vq_{m+1} \in \gL_{m + 1},\, 1 \leq m < M \}, \nonumber
    \end{equation}
    and between the last layer and goals $\{ (\vq, \vq_{\textrm{g}}) \mid \forall \vq \in \gL_M,\ \vq_{\textrm{g}} \in \gG\},$
    leading to a complete $(M+2)$-partite directed graph.
\end{definition}
We typically set $p_m = \gU(\gC)$ as uniform distributions over configuration space (bounded by configuration limits cf.~\cref{fig:method}). Consequently, the graph nodes are represented as the waypoint tensors for all layers $\mQ \in \sR^{M \times N \times d}$ and the goal configuration $\mG \in \sR^{|\gG| \times d}$ from $\gG$, within the state limits. Extending~\cref{def:graph} to \textit{spline discretization structure} by replacing the straight line with the cubic polynomials, representing any edge $(\vq, \vq') \in \gE$, is straightforward with Akima spline~\cite{akima1974method} (cf.~\cref{sec:akima}).

\begin{definition}[Path In $G$]
    A path $f: [0, 1] \rightarrow \vq$ in $G$ exists if it $\vf(0) = \vq_0,\, \vf(1) \in \gG$ and its piecewise linear segments correspond to edges connecting $\vq_0$ and $\vq_g \in \gG$.
\end{definition}

\subsection{State Machine On Graph}
The graph $G$ is represented by the state machine $(\gV, \gE, c, t)$~\cite{puterman2014markov}, where the state set is the node set of $G$, the action set is equivalent to the edge set $\gE$, the transition cost function $c: \gV \times \gE \rightarrow \sR$, deterministic state transition $t(\vq' \mid \vq, (\vq, \vq')) = 1,\, (\vq, \vq') \in \gE$.
The goal set $\gG \subset \gV$ is the terminal set with terminal costs $c_g(\vq),\,\vq \in \gG$. A policy $\pi: \gV \rightarrow \gE$ depicts the decision to transition to the next layer, given the current state at the current layer.

We use unbounded occupancy collision costs
\begin{equation} \label{eq:cost_coll}
    c_{\textrm{coll}}(\vq) = 0 \textrm{ if } \vq \in \textrm{int}_{\delta}(\gC_{\textrm{free}}) \textrm{, else } \infty,
\end{equation}
which merges the planning and verification steps (cf. Proposition~\ref{prop:feasible}).
Then, the transition cost function can be defined
\begin{equation} \label{eq:cost}
   c(\vq, (\vq, \vq')) = \underbrace{\int_{a}^{b} c_{\textrm{coll}}(\vf(t)) f' d t}_{\textrm{collision}} + \underbrace{\norm{\vq - \vq'}}_{\textrm{smoothness}},
\end{equation}
where the collision term is a straight-line integral with $f' = 1 / \norm{\vq' - \vq}$ between $\vf(a) = \vq$ and $\vf(b) = \vq'$.
Finding the optimal value function on $G$ is straightforward by iterating the Bellman optimality operator
\begin{align} \label{eq:bellman}
    v_G(\vq) \leftarrow &\min_{(\vq, \vq')} \sum_{\vq'} t(\vq' \mid \vq, (\vq, \vq')) \left(c(\vq, (\vq, \vq')) + v_G(\vq') \right) \nonumber \\
   \leftarrow &\min_{(\vq, \vq')} \left(c(\vq, (\vq, \vq')) + v_G(\vq') \right)
\end{align}
with a finite number of iterations $K = M + 1$. The optimal policy is extracted by tracing the optimal value function
\begin{equation} \label{eq:trace}
    \pi^*(\vq) = \argmin_{(\vq, \vq')} \left(c(\vq, (\vq, \vq')) + v_G^*(\vq') \right),
\end{equation}
from $\vq_0$ until $\vq' \in \gG$~\cite{puterman2014markov}. This produces a sequence of edges ${\gP = \{(\vq_0, \vq_1), \ldots, (\vq_M, \vq_g) \mid \vq_g \in \gG\}}$.
\begin{proposition} \label{prop:clen}
    By following any policy on $(\gV, \gE, c, t)$ from $\vq_0$, $\gP$ has a constant cardinality of $M + 1$.
\end{proposition}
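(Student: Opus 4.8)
The plan is to introduce a \emph{depth} function on the nodes and observe that every admissible edge advances the depth by exactly one, so that reaching the terminal set forces a fixed number of transitions regardless of the policy. Concretely, I would define $\ell: \gV \to \{0, 1, \ldots, M+1\}$ by setting $\ell(\vq_0) = 0$, $\ell(\vq) = m$ whenever $\vq \in \gL_m$, and $\ell(\vq_g) = M+1$ for every $\vq_g \in \gG$. This map is well-defined precisely because the node set $\{\vq_{\textrm{s}}, \gM, \gG\}$ of the $(M+2)$-partite graph is partitioned into the disjoint blocks $\{\vq_0\}, \gL_1, \ldots, \gL_M, \gG$, so each node belongs to exactly one block and receives a unique depth.

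Next I would verify the key invariant: every edge $(\vq, \vq') \in \gE$ satisfies $\ell(\vq') = \ell(\vq) + 1$. This is immediate from the three edge families in \cref{def:graph}: the start-to-first-layer edges go from depth $0$ to depth $1$; each inter-layer edge $(\vq_m, \vq_{m+1})$ goes from depth $m$ to depth $m+1$; and each last-layer-to-goal edge goes from depth $M$ to depth $M+1$. Since depth strictly increases along every edge, the directed graph is acyclic, which rules out loops. I would then trace the policy: starting from $\vq_0$ at depth $0$, applying $\pi$ selects an outgoing edge and, by the invariant, moves to depth $1$; inductively, after $j$ transitions the current state sits at depth $j$. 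Because the layer-wise connections are complete, every non-goal node has at least one outgoing edge, and the only terminal states are the goals at depth $M+1$, so the trajectory can neither stall before depth $M+1$ nor overshoot it. It therefore reaches $\gG$ after exactly $M+1$ transitions, independent of the chosen policy. Since $\gP$ is the sequence of edges traversed from $\vq_0$ until $\vq' \in \gG$, its cardinality equals the number of transitions, namely $M+1$.

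As for the main obstacle, this is essentially a structural counting argument over a layered directed acyclic graph, so there is no deep difficulty. The only point requiring genuine care is confirming that the policy-induced trajectory actually terminates in $\gG$ rather than stalling at a non-terminal node or cycling; both are precluded by the completeness of the partite connections (guaranteeing an outgoing edge at every non-goal node) together with the strict monotonicity of $\ell$ along edges (guaranteeing acyclicity). With these two facts the conclusion is forced.
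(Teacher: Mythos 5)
Your proof is correct and follows essentially the same route as the paper's: the paper's argument is precisely that each policy step strictly increases the layer index by one (your depth invariant $\ell(\vq') = \ell(\vq)+1$), forcing exactly $M+1$ edges from $\vq_0$ to $\gG$. Your version merely makes explicit what the paper leaves implicit --- the well-definedness of the depth function, acyclicity, and the fact that the trace can neither stall nor overshoot --- which is a sound elaboration rather than a different approach.
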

\begin{proof}
    By construction of graph $G$, each application of~\cref{eq:trace} increases the layer number $m$ strictly monotonically, since $t(\vq_{m+1} \mid \vq_m, \pi(\vq_m)) = t(\vq_{m+1} \mid \vq_m, (\vq_m, \vq_{m+1})) = 1,\, (\vq_m, \vq_{m+1}) \in \gE$. Hence, $|\gP| = M + 1$.
\end{proof}
Finding optimal paths by finite~\gls{vi} over a discretization structure has been a common practice and widely applied in different settings~\cite{bertsekas2012dynamic}. However, to our knowledge, applying~\gls{vi} over a random multipartite graph, enabling batching mechanisms over planning instances, is novel, as we present in the next section.

\subsection{Batching The Planner}

In practice, we do not need to construct an explicit graph data structure due to $G$'s multipartite structure.
Observing the deterministic state transition and the equal cardinality of layers, we just need to compute and maintain the transition cost matrices $\mC_s \in \sR^N,\, \mC_h \in \sR^{(M - 1) \times N \times N},\, \mC_l^{N \times |\gG|}$ and value matrices $V_s \in \sR,\, \mV_h \in \sR^{M \times N},\, \mV_g \in \sR^{|\gG|}$, where $\mC_s$ is the transition costs from $\vq_0$ to the first layer; $\mC_h, \mC_l$ hold transition costs between middle layers and last layer to goals; $V_s, \mV_h, \mV_g = \mC_g$ hold values of start, layers costs and terminal goal costs. Given the uniformly-sampled waypoint tensors $\mQ \in \sR^{M \times N \times d}$ and the goals $\mG \in \sR^{|\gG| \times d}$, the cost-to-go term of the transition costs~\cref{eq:cost} is approximately computed by first probing an $H$ number of equidistant points on all edges, evaluating them in batches, and taking the mean values over the probing dimension. We assume all cost functions are batch-wise computable. 

The GTMP algorithm is compactly presented in~\Algref{alg:gtmp}. Note that Line 6 is a matrix-reduced \texttt{min} operation on the last dimension, while the \texttt{sum} is broadcasted to the middle dimension of the cost matrix $\mC_h \in \sR^{(M - 1) \times N \times N}$ from the value matrix $\mV_h \in \sR^{M \times 1 \times N}$. After $M + 1$ Bellman iterations (Line 5-7), given the converged value matrix $\mV_h^*$, a sequence of waypoints is traced over the layers to the goals (Line 11-13). Notice that all component matrices can be straightforwardly vectorized by adding the batch dimension $B$ for all matrices, and the whole algorithm can be JAX \texttt{vmap} over sampling seeds on line 1. Note that~\cite{pan2012gpu, bialkowski2011massively, blankenburg2020towards, thomason2024motions} focus on vectorizing collision checking or forward kinematics in a single planning instance, while we can ensure that~\Algref{alg:gtmp} can be vectorized at the instance-level~\cite{le2024accelerating} by~\cref{prop:clen}. 
\begin{figure}[b]
\vspace{-0.5cm}
\removelatexerror
\begin{algorithm}[H]
\caption{Global Tensor Motion Planning}
\label{alg:gtmp}
\DontPrintSemicolon
\KwIn{Start $\vq_0$, Goals $\mG \in \sR^{|\gG| \times d}$}
Uniformly sample $\mQ \in \sR^{M \times N \times d}$ \\ 
Compute cost matrices $\mC_s, \mC_h, \mC_l$ as~\cref{eq:cost} \\ 
Init $V_s \in \sR,\, \mV_h \in \sR^{M \times N},\, \mV_g \in \sR^{|\gG|}$ \tcp*{finite VI}
    \For{$1 \leq k \leq M + 1$}{
        $\mV_h[M-1] \leftarrow \min (\mC_l + \mV_g)$ \\
        $\mV_h[:M-1] \leftarrow \min (\mC_h + \mV_h[1:],\, \textrm{axis}=-1)$ \\
        $V_s \leftarrow \min (\mC_s + \mV_h[0]) $\\
     }
     $i \leftarrow \argmin (\mC_s + \mV_h^*[0])$ \tcp*{path tracing}
     $\gP = \{i\}$ \\
     \For{$1 \leq m  \leq M - 1$}{
        $i \leftarrow \argmin (\mC_h[m - 1, i] + \mV_h^*[m])$ \\
        Append $\mQ[m, i]$ to $\gP$\\
     }
    $i \leftarrow \argmin (\mC_l[i] + \mV_g)$ and append $\mG[i]$ to $\gP$ \\
    \KwOut{$\gP$}
\end{algorithm}%
\vspace{-0.5cm}
\end{figure}


\textbf{Complexity Analysis.} The Bellman matrix update (Line 5-7) is an asynchronous update in batches (i.e., updates based on values of previous iteration) and also known to converge~\cite{bertsekas2015parallel}. Considering the layer number $M$, waypoint number per layer $N$, and probing number $H$, we assume that the Bellman matrix update is executed on $P$ processor units, an estimate of time complexity per~\gls{vi} iteration is $\gO(MN^2 / P)$ due to the broadcasted \texttt{sum} and \texttt{min} operator on Line 6~\Algref{alg:gtmp}. Hence, the overall worst-case time complexity is $\gO(M^2N^2/P)$, with a fixed number of $M+1$ \gls{vi} iterations.
The collision-checking time complexity is $\gO(MN^2H/P)$, and thus, the overall time complexity is $\gO(MN^2(H + M)/P)$. The space complexity is $\gO(MN^2H)$ due to the collision checking. Theoretical investigations regarding GTMP probabilistic completeness are presented in~\cref{sec:theory}.


\section{Experiment Results} \label{sec:experiments}

We assess the performance of GTMP and its smooth extension on batch planning and single planning capability compared to popular baselines and collision-checking mechanisms. Hence, we investigate the following questions for batch trajectory generation, or for finding the global solution: i) how does GTMP with JAX/GPU-implementation compare to highly optimized probabilistic-complete planners implemented in PyBullet/OMPL~\cite{sucan2012open, coumans2019} or in VAMP~\cite{thomason2024motions}?, ii) how does GTMP-Akima compare to popular gradient-based smooth trajectory optimizers such as CHOMP~\cite{zucker2013chomp} or GPMP~\cite{mukadam2018continuous}?, and iii) Are the empirical results consistent with the theoretical guarantees (\cref{thm:prob})?

\textbf{Settings.} We run all CPU-based planners (RRTC, BKPIECE) on AMD Ryzen 5900X clocked at 3.7GHz and GPU-based planners (GTMP, CHOMP, GPMP, cuRobo) on a single Nvidia RTX 3090. Note that GTMP, CHOMP, and GPMP are implemented in JAX~\cite{jax2018github}, and the planning times are measured after JIT. We use cuRobo's official PyTorch implementation.  We initialize CHOMP and GPMP with samples from a high-variance Gaussian process prior~\cite{urain2022sgmpg} connecting from the start to the goals. We set a default probing $H=10$ and used uniform sampling for all GTMP runs. For all CPU-based planners, we give a timeout of one minute and report metrics after simplification routines. Planning time per task is the sum of all planning instances, which includes simplification time for CPU-based planners, while GTMP does not need path simplification.

\textbf{Metrics.} The metrics are chosen for comparing across probabilistically-complete planners and trajectory optimizers: (i) \textit{Planning Time (s)} in seconds of a batch of paths given a task, (ii) \textit{Collision Free (CF \%)} percentage of paths in a batch (failure cases are either in collision or timeout), (iii) \textit{Minimum Cosine Similarities (Min Cosim)} over consecutively path segments and averaging over the batch of paths in a task, (iv) \textit{Paths Diversity (PD)} as the mean of pairwise Sinkhorn~\cite{cuturi2013sinkhorn} distances in a batch having $B$ paths
\begin{equation}
    PD = \frac{1}{B (B - 1)}{\textrm{OT}_\lambda (\gP_i, \gP_j)},\, i,j \in \{1, \ldots, B\},
\end{equation}
where we treat the path $\gP = \{\vq_0, \ldots \vq_T\}$ as empirical distribution with uniform weights, and different paths can have different horizons $T$. The entropic scalar $\lambda=5e^{-3}$ is constant. The metric \textit{Min Cosim} measures the worst/average rough turns over path segments, which represents worst-case jerks since baselines plan different trajectory dynamic orders. The PD measures the spread of solution paths correlating to solutions' modes discovery. 

\subsection{Batch Planning Comparison} \label{subsec:batch_exp}

 Fig.~\ref{fig:exp} (top-row) compares GTMP and GTMP-Akima with OMPL implementation of (single-query) RRTConnect~\cite{kuffner2000rrt} and BKPIECE~\cite{csucan2009kinodynamic}. The environments are planar occupancy maps of Intel Lab, ACES3 Austin, Orebro, Freiburg Campus, and Seattle UW Campus generated from the Radish dataset~\cite{radish}. The maps are chosen to include narrow passages, large spaces, and noisy occupancies (cf.~\cref{fig:method}). We randomly sample $100$ start-goal pairs as tasks on each map and plan $100$ paths per task. We clearly see a comparable Min Cosim (i.e., similar statistics of rough turns) and PD of GTMP (M=200, N=4) compared to baselines across maps and in aggregated statistics over maps. With JIT and GPU utilization, GTMP consistently produces batch paths with a fixed number of segments and x10000 less wall-clock time compared to baselines across maps.
\begin{figure*}[t]
    \centering
    \includegraphics[width=\linewidth]{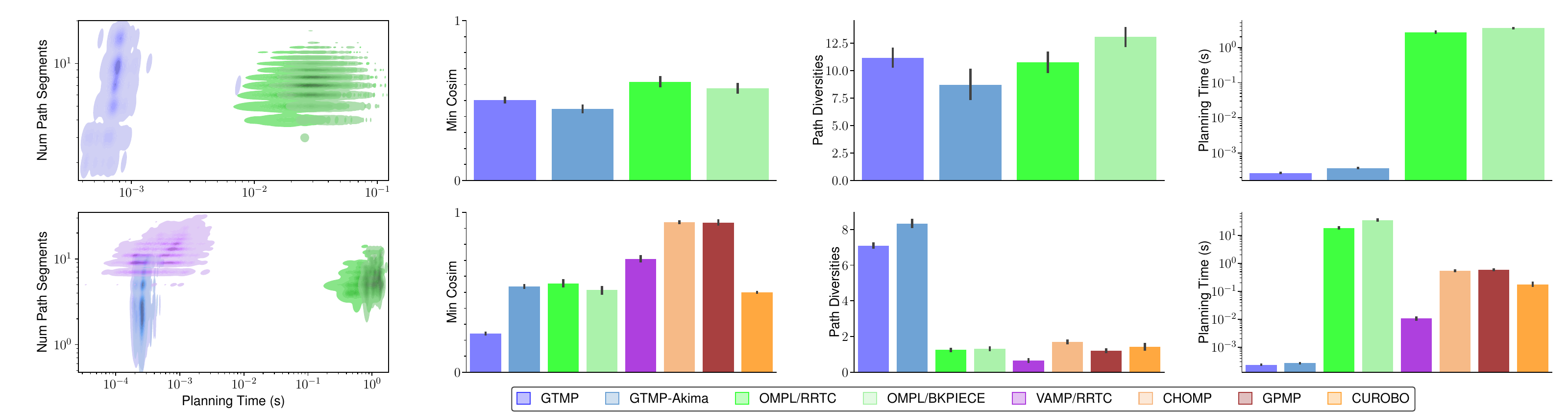}
    \vspace{-0.7cm}
    \caption{Aggregated statistics of comparison experiments on Planar Occupancy (top-row) and Panda MBM dataset (bottom-row). We note the log scale on the Planning Time axes. The batch planning time is the sum of instance time for sequential planners (last column). All plotted data points are based on successful path statistics.}
    \label{fig:exp}
\end{figure*}
    
    
    
    


\begin{table}[b]
 \vspace{-0.5cm}
    \addtolength{\tabcolsep}{-0.1em}
    \tiny    
    \centering
    \caption{Aggregated Statistics Of M$\pi$Nets Dataset}
    \label{tab:mpinet_exp}
    
    \begin{tabular}{l|ccccc}

    \textbf{Algorithms} & \textbf{PT $\downarrow$} (ms) & \textbf{Success $\uparrow$} (\%) & \textbf{Path Length $\downarrow$} & \textbf{Min Cosim $\uparrow$} & \textbf{PD $\uparrow$} \\
    \midrule
    GTMP (N=30, M=2) & $0.11$ & $99.6$ & $4.8$ & $-0.3$ & $7.7$ \\
    GTMP-Akima (N=30, M=2) & $0.10$ & $97.1$ & $7.3$ & $-0.1$ & $\mathbf{7.8}$ \\
    \midrule
    VAMP/RRTC~\cite{thomason2024motions} & $\mathbf{0.09}$ & $\mathbf{100.0}$ & $3.6$ & $\mathbf{0.1}$ & - \\
    cuRobo~\cite{sundaralingam2023curobo} & $43.1$ & $99.7$ & $\mathbf{2.8}$ & $0.0$ & - \\

    \end{tabular}
\end{table}
We choose the \textsc{MotionBenchMaker} (MBM) dataset~\cite{chamzas2021motionbenchmaker} of $7$-DoF Franka Emika Panda tasks such as table-top manipulation (\textit{table pick and table under pick}), reaching (\textit{bookshelf small, tall and thin}), and highly-constrained reaching (\textit{box and cage}). Each task is pre-generated with $100$ problems available publicly.
We implement our collision-checking in JAX via primitive shape approximation, such as a Panda spherized model, oriented cubes, and cylinders representing tasks in MBM. The default hyperparameters and compilation configurations for VAMP/RRTC, OMPL/RRTC, and OMPL/BKPIECE are also adopted following~\cite{thomason2024motions}\footnote{We use default \textit{shortcut} simplification for OMPL planners while using default \textit{shortcut and B-spline smoothing} for VAMP/RRTC.}. CHOMP and GPMP plan first-order trajectories having a horizon of $T=32$. All algorithms are compared on the planning performance of a batch of $B=50$ paths for all tasks.

Fig.~\ref{fig:exp} (bottom-row) shows the planning performance comparisons between GTMP (N=30, M=2) and baseline probabilistically-complete planners and gradient-based trajectory optimizers. We see that GTMP consistently has the best diversity (PD) and worst rough turn statistics (Min Cosim) in all tasks. This is due to the maximum exploration behavior of GTMP by sampling uniformly over configuration space, which increases the risk of rough paths. In principle, increasing points per layer $N$ while having minimum solving layers $M$ would improve Min Cosim due to having more chances to discover smoother paths with fewer segments, as long as GPU memory allows (cf.~\cref{fig:sweep_exp}). Compared with gradient-based optimizers, GTMP-Akima with spline discretization construction has a similar Min Cosim to cuRobo while not requiring gradients from the planning costs. Note that cuRobo additionally considers dynamical constraints, which increases planning time but improves metrics such as maximum model jerk. On batch planning efficiency, GTMP and GTMP-Akima achieve x50 faster than state-of-the-art VAMP/RRTC implementation while being x2500 faster than CHOMP/GPMP/cuRobo and x100000 faster than the OMPL implementation with PyBullet collision checking. We leave the investigation of combining GTMP with the VAMP collision checking for future work.

Fig~\ref{fig:exp} (first-column) shows the distributions of single-instance planning time versus number of path segments, reflecting inherent algorithmic differences between GTMP and RRTC implementations. RRTC blobs are spread due to differences in randomized graph explorations between planning instances and are separated due to differences in collision-checking efficiency~\cite{thomason2024motions}. GTMP vectorizes planning via layered structure, resulting in predictable narrow distribution due to fixed-segment path planning.

\subsection{Single Plan Comparison} \label{subsec:mpinet_exp}

We compare GTMP and GTMP to the strong baselines such as VAMP/RRTC~\cite{thomason2024motions} and cuRobo~\cite{sundaralingam2023curobo}, in terms of single planning for execution, on the M$\pi$Nets dataset~\cite{fishman2023motion} of diverse $7$-DoF Franka Emika Panda tasks. We set $B=50$ for GTMP/GTMP-Akima and select the lowest path length for execution. We plan a single instance for VAMP/RRTC and cuRobo.
~\cref{tab:mpinet_exp} shows that GTMP achieves a similar success rate to the baselines (i.e., at least one successful path in the batch) while having similar planning time to the state-of-the-art VAMP/RRTC. However, due to the maximum exploration nature, GTMP performs worse regarding path quality. Future works on better sampling strategy per layer could improve GTMP path quality while increasing the sample efficiency on $M, N$ for low-memory planning.


    

    


\subsection{Ablation Study} \label{subsec:ablation}
This section explores various aspects of GTMP by sweeping the number of layer $M$ and number of points per layer $N$.~\cref{fig:sweep_exp} shows the sweeping statistics of $M \in \{2, 3, \ldots, 80\}, N\in \{10, 11, \ldots, 100\}$ on the Intel Lab occupancy map with a fixed start-goal pair to experimentally confirm the probabilistic completeness~\cref{thm:prob}. In Fig.~\ref{fig:sweep_exp}, Planning Time heatmap shows an experimentally infinitesimal increase in polynomial planning time-complexity over increasing $M,N$ (due to JIT-ing finite VI loops and efficient batch collision-checking, cf.~\cref{sec:method}). Then, the CF(\%) heatmap directly reflects the path existence probability~\cref{eq:probc}. Notice that the minimum layer $M_m = 3$ must be set for collision-free paths in the batch. Interestingly, $M_m$ is also the optimal number of layers to achieve non-zero CF(\%) with a minimal point per layer $N$ (red star), which confirms the observation in~\cref{sec:theory}. Next, further observations on Min Cosim also confirm that with less $M$, the paths are smoother. Finally, higher path diversity is induced by having higher CF(\%), corresponding to the top-right heatmap.

\begin{figure}[t]
    \centering
    \includegraphics[width=\columnwidth]{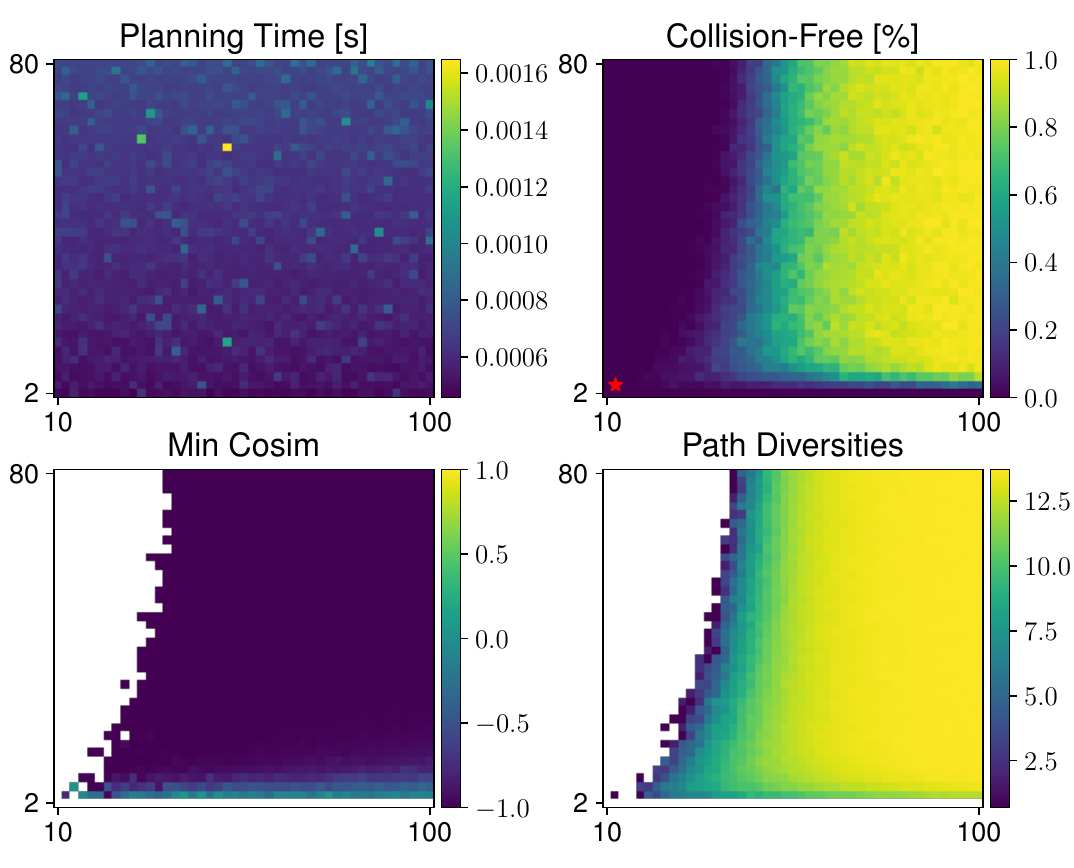}
    \vspace{-0.8cm}
    \caption{For each $M$ (y-axis), $N$ (x-axis), we set the number of probing $H=30$ and plan the batch of $B=200$ paths. The red star denotes the minimum number of layers $M_m$, corresponding to the minimum requirement of $N$ to discover some solutions experimentally.}
    \label{fig:sweep_exp}
    \vspace{-0.6cm}
\end{figure}

\section{Discussion \& Conclusions} \label{sec:conclusion}

GTMP offers several advantages algorithmically, as it is \textit{vectorizable} over a large number of planning instances, it does not require \textit{joint-limit enforcement} (i.e., sampling points in the limits), \textit{gradients} or \textit{simplification routines}. On the practical side, GTMP is \textit{easy to implement} (i.e., only tensor manipulation), \textit{easy to tune} (i.e., hyperparameter $M, N, H$), and \textit{easy to incorporate} motion planning objectives in~\cref{eq:cost_akima}.

GTMP is designed to be efficient in batch planning representing multiple instances of the same planning problem. The batch dimension representing the multiple GTMP planning instances can be interpreted as multiple replanning attempts. Indeed, Theorem~\ref{thm:prob} depicts probabilistic completeness over the batch dimension and $M, N$, contrasting with probabilistic completeness over exploration nodes as in RRT*~\cite{karaman2011sampling}. Beyond GTMP, since~\Algref{alg:gtmp} is cheap in common case, we could also derive an outer loop gradually increasing $M, N$ until some solutions in the batch are found.

GTMP addresses global exploration challenges but comes with memory requirements, especially for GPU acceleration. In contrast, local methods such as CHOMP or GPMP leverage gradient-based, more memory-efficient trajectory optimization. GTMP-Akima, for instance, avoids the need for gradients while delivering smooth velocity trajectories by a \textit{spline discretization structure}, making it a viable initialization for methods like GPMP, potentially combining the strengths of both approaches.

Variants of GTMP emphasize maximum exploration while maintaining smooth trajectory structures. Exploring further smooth discretization structures for higher-order planning is exciting, as the current Akima discretization structure only provides a $C^1$ spline grid. Furthermore, we are eager to adopt the efficient collision-checking of VAMP~\cite{thomason2024motions} for GTMP, when the VAMP batching configuration collision-checking becomes available, extending GTMP to CPU-based vectorization. Lastly, GTMP suggests the direction of probabilistically-complete batch planners, serving as a differentiable global planner or a competent oracle for learning.




\section*{APPENDIX}
\label{app:akima}
\section{Extension: Akima Spline} \label{sec:akima}

The Akima spline~\cite{akima1974method} is a piecewise cubic interpolation method that exhibits $C^1$ smoothness by using local points to construct the spline, avoiding oscillations or overshooting in other interpolation methods, such as cubic splines or B-splines.

\begin{definition}[Akima Spline] \label{def:akima}
    Given a point set $\{ \vq_i | \vq \in \gC \}_{i=1}^P $,  the Akima spline constructs a piecewise cubic polynomial $ f(t) $ for each interval \( [t_i, t_{i+1}] \) 
\begin{equation} \label{eq:poly_akima}
    f_i(t) = \vd_i (t - t_i)^3 + \vc_i (t - t_i)^2 + \vb_i (t - t_i) + \va_i,
\end{equation}

where the coefficients $\va_i, \vb_i, \vc_i, \vd_i \in \gC$ are determined from the conditions of smoothness and interpolation. Let $ \vm_i = (\vq_{i + 1} - \vq_{i}) / (t_{i+1} - t_i) $ at $ t_i $, the spline slope is computed from $ m_{i-1}, m_{i+1} $

\begin{equation}
\vs_i = \frac{|\vm_{i+1} - \vm_i | \vm_{i-1} + |\vm_{i-1} - \vm_{i-2} | \vm_{i}}{|\vm_{i+1} - \vm_i| + |\vm_{i-1} - \vm_{i-2}|}.
\end{equation}

The spline slopes for the first two points at both ends are $\vs_1 = \vm_1, \vs_2 = (\vm_1 + \vm_2) / 2, \vs_{P-1} = (\vm_{P-1} + \vm_{P-2}) / 2, \vs_P = \vm_{P-1}$. Then, the polynomial coefficients are uniquely defined 
\begin{align} \label{eq:akima_coeff}
&\va_i = \vq_i, \quad \vb_i = \vs_i, \nonumber\\
&\vc_i = (3\vm_i - 2\vs_i - \vs_{i+1}) / (t_{i+1} - t_i),\\
&\vd_i = (\vs_i + \vs_{i+1} - 2\vm_{i}) / (t_{i+1} - t_i)^2. \nonumber
\end{align}
\end{definition}

The Akima spline slope is determined by the local behavior of the data points, preventing oscillations that can occur when using global information. Interpolating with Akima spline does not require solving large systems of linear equations, making it computationally efficient as an ideal extension to~\cref{def:graph} to \textit{a spline discretization structure}.

\begin{definition}[Akima Spline Graph]
    Given a geometric graph $G = (\gV, \gE)$ (cf.~\cref{def:graph}), the Akima Spline graph $G_A$ has the edge set $\gE$ geometrically augmented by cubic polynomials. In particular, consider an edge $(\vq_{m, i}, \vq_{m+1, j}) \in \gE$ with $i, j$ are respective indices of points at layers $\gL_m, \gL_{m+1}$, the spline slope is defined with $ \vm_{m, i, j} = (\vq_{m+1, j} - \vq_{m, i}) / (t_{i+1} - t_i) $ as Modified Akima interpolation~\cite{akima1974method}
    {
    \begin{align}
        \vs_{m, i, j} &= \frac{\vw_{m, i, j} \vm_{m-1, i, j} +  \vw_{m-1, i, j} \vm_{m, i, j}}{\vw_{m, i, j} + \vw_{m-1, i, j}} \nonumber\\ 
        \vw_{m, i, j} &= \left|\frac{1}{N^2} \sum_{i, j} \vm_{m+1, i, j} - \vm_{m, i, j} \right| \\
        &+ \frac{1}{2} \left|\frac{1}{N^2} \sum_{i, j} \vm_{m+1, i, j} + \vm_{m, i, j} \right| \nonumber\\
        \vw_{m-1, i, j} &= \left|\vm_{m - 1, i, j} -  \frac{1}{N^2} \sum_{i, j} \vm_{m - 2, i, j} \right| \nonumber\\
        &+ \frac{1}{2} \left|\vm_{m - 1, i, j} + \frac{1}{N^2} \sum_{i, j} \vm_{m - 2, i, j} \right| \nonumber.
    \end{align}
    }%
    Then, the augmented cubic polynomial $f_{i, j}(t),\, t \in [t_m, t_{m+1}]$ is computed following~\cref{eq:akima_coeff}
    \begin{align} \label{eq:akima_segment}
    &\vs_m = \frac{1}{N^2} \sum_{i, j} \vs_{m, i, j},\,\va_{m, i, j} = \vq_{m, i},\, \vb_{m, i, j} = \vs_m,\\
    &\vc_{m, i, j} = (3\vm_{m, i, j} - 2\vs_m - \vs_{m+1}) / (t_{m+1} - t_m), \nonumber\\
    &\vd_{m, i, j} = (\vs_m + \vs_{m+1} - 2\vm_{m, i, j}) / (t_{m+1} - t_m)^2. \nonumber 
\end{align}
\end{definition}
The original Akima interpolation computes equal weight to the points on both sides, evenly dividing an undulation. When two flat regions with different slopes meet, this modified Akima interpolation~\cite{akima1974method} gives more weight to the side where the slope is closer to zero, thus giving priority to the side that is closer to horizontal, which avoids overshoot. Notice that after pre-computing $\vm_{m, i, j}$ for every edge in $G_A$, every polynomial segment~\cref{eq:akima_segment} can be computed in batch for $G_A$. Furthermore, given a batch of graphs $G_A$, adding a batch dimension for these equations is straightforward. The transition cost is then defined
\begin{equation} \label{eq:cost_akima}
   c(\vq, \vq') = \int_{a}^{b} \left(c_{\textrm{coll}}(f(t)) + 1 \right) \norm{f'(t)} dt,
\end{equation}
where $f(t)$ is the cubic polynomial representing the edge $(\vq, \vq') \in G_A$.

\begin{remark}
    With some algebra derivations, one can verify the cubic polynomial $f_{i, j}(t),\, t \in [t_m, t_{m+1}]$ representing any edge $(\vq_{m, i}, \vq_{m+1, j}) \in G_A$ satisfying four conditions of continuity 
    \begin{align}
        &\vf_{i,j}(t_m) = \vq_{m, i},\,\vf_{i,j}(t_{m+1}) = \vq_{m+1, j}, \\
        &\vf_{i,j}'(t_m) = \vs_{m},\,\vf_{i,j}'(t_{m+1}) = \vs_{m+1}, \nonumber 
    \end{align}
    for any $m \in \{0, \ldots, M+1\},\, i, j \in \{1, \ldots, N\}$. Hence, any path $f \in G_A$ is an Akima spline.
\end{remark}

The Akima spline provides \( C^1 \)-continuity for first-order planning; however, the second derivative is not necessarily continuous. Note that~\cref{thm:prob} does not necessarily hold for Akima Spline Graph $G_A$ and is left for future work.

\section{Theoretical Analysis} \label{sec:theory}


\textbf{Notation.} Let $\gR$ be the set of all paths in $G$. The path cost is the sum of straight-line integrals over the edges $c(g) = \sum_{m=0}^M c(\vq_m, \vq_{m+1}) + c_g(\vg(1)),\, g \in \gR$.
\begin{assumption} \label{asm:uniform}
    We assume that all associated proposal distributions at each layer are uniformly distributed on the configuration space $\forall 1 \leq m \leq M,\, p_m \defi \gU(\gC)$.
\end{assumption}
\begin{assumption} \label{asm:maximum}
    Consider a feasible planning problem, there exists a feasible path $f: [0, 1] \rightarrow \gC_{\textrm{free}}$ having margin $r = \inf_{t \in [0, 1]} \norm{\vf(t) - \vq},\vq \in \gC_{\textrm{coll}}$, such that $r > 0$.
\end{assumption}
These assumptions are common in path planning applications, where the free-path set is not zero-measure $\mu(\gF_{\textrm{free}}) \neq 0$.

\begin{proposition} [Feasibility Check] \label{prop:feasible}
    For any planning problem, $v^*_{G}(\vq_0) < \infty$ if and only if there exists a feasible path in $G$.
\end{proposition}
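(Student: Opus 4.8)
The plan is to exploit the two-valued nature of the collision cost in~\cref{eq:cost_coll} to reduce finiteness of $v^*_{G}(\vq_0)$ to a purely combinatorial statement about the existence of a collision-free chain of edges in $G$. The backbone is a single observation about edge costs: by~\cref{eq:cost}, the transition cost of an edge $(\vq, \vq')$ splits into the line integral of $c_{\textrm{coll}}$ along the straight segment plus the finite smoothness term $\norm{\vq - \vq'}$. Since $c_{\textrm{coll}}$ takes only the values $0$ and $\infty$, and since the integral is taken with respect to arc length ($f' = 1/\norm{\vq' - \vq}$), this integral equals $0$ precisely when the entire segment lies in $\textrm{int}_{\delta}(\gC_{\textrm{free}})$ and $+\infty$ as soon as the segment meets $\gC_{\textrm{coll}}$ on a set of positive length. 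I would first isolate this \emph{edge dichotomy}: each edge cost $c(\vq, \vq')$ is either exactly $\norm{\vq - \vq'} < \infty$ (collision-free edge) or $+\infty$ (colliding edge).

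For the direction ($\Leftarrow$), suppose a feasible path $f$ exists in $G$. By the definition of a feasible path in $G$, it is piecewise linear with its $M+1$ segments (by~\cref{prop:clen}) corresponding to edges, all lying in the free space, with $\vf(0) = \vq_0$ and $\vf(1) \in \gG$. By the edge dichotomy every segment cost equals its Euclidean length, so the total cost $c(f) = \sum_{m=0}^{M} c(\vq_m, \vq_{m+1}) + c_g(\vf(1))$ is a finite sum of $M+1$ finite terms plus a finite terminal cost, hence finite. Because $v^*_{G}(\vq_0)$ is the minimal path cost over all chains from $\vq_0$ to $\gG$ (the fixed point of~\cref{eq:bellman}), we conclude $v^*_{G}(\vq_0) \le c(f) < \infty$.

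For the direction ($\Rightarrow$), suppose $v^*_{G}(\vq_0) < \infty$. Tracing the optimal policy via~\cref{eq:trace} from $\vq_0$ yields a chain $g^*$ of exactly $M+1$ edges by~\cref{prop:clen}, with $c(g^*) = v^*_{G}(\vq_0) < \infty$. Since the path cost is the sum of nonnegative edge costs plus a terminal cost, finiteness forces every individual edge cost to be finite; by the edge dichotomy each corresponding segment lies in $\textrm{int}_{\delta}(\gC_{\textrm{free}}) \subseteq \gC_{\textrm{free}}$. The concatenation is continuous, starts at $\vq_0$, ends in $\gG$ by construction of the terminal layer, and has finite total variation $\sum_{m=0}^{M} \norm{\vq_m - \vq_{m+1}}$, so it satisfies every clause of the feasible-path definition and is therefore a feasible path in $G$.

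The main obstacle I anticipate is the gap between the $\delta$-interior $\textrm{int}_{\delta}(\gC_{\textrm{free}})$ used in the cost and the full free space $\gC_{\textrm{free}}$ used in the feasibility definition. The inclusion $\textrm{int}_{\delta}(\gC_{\textrm{free}}) \subseteq \gC_{\textrm{free}}$ makes the ($\Rightarrow$) direction immediate, but the ($\Leftarrow$) direction strictly needs a feasible segment to incur a \emph{zero} collision integral, which I would secure by invoking the positive margin $r > 0$ of~\cref{asm:maximum} and taking $\delta \le r$, so that a genuinely feasible path stays inside $\textrm{int}_{\delta}(\gC_{\textrm{free}})$. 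A secondary technical point is to confirm rigorously that the arc-length integral of the two-valued $c_{\textrm{coll}}$ is well defined and blows up exactly when the segment intersects $\gC_{\textrm{coll}}$ on a positive-length set; this follows directly from the parametrization in~\cref{eq:cost} and the openness of $\gC \setminus \gC_{\textrm{coll}}$.
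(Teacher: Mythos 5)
Your argument is essentially the paper's: both identify $v_G^*(\vq_0)$ with the minimum path cost over paths in $G$ via Bellman optimality, observe that the smoothness term is always finite (bounded total variation), and use the $0/\infty$ structure of $c_{\textrm{coll}}$ to conclude that a path's cost is finite exactly when it is collision-free --- your ``edge dichotomy'' plus summation, and the explicit split into the two directions with \cref{prop:clen}, is just a more detailed rendering of the paper's path-level statement. The $\textrm{int}_{\delta}(\gC_{\textrm{free}})$-versus-$\gC_{\textrm{free}}$ caveat you flag is real but is glossed over by the paper's own proof as well (it treats infinite cost as equivalent to meeting $\gC_{\textrm{coll}}$); only note that your patch via Assumption~\ref{asm:maximum} borrows the margin $r$ of a \emph{different} path (the assumption's feasible path of the continuous problem, not the given feasible path in $G$), so strictly the equivalence should be read with ``feasible in $G$'' meaning ``stays in the $\delta$-interior,'' which is also how the paper implicitly uses it downstream in \cref{thm:prob}.
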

\begin{proof}
    According to Bellman optimality, $v_G^*(\vq_0) = \min_{g} \{c(g) \mid g \in \gR\}$ is the minimum path cost reaching the goals. By definition, the smoothness term in~\cref{eq:cost} is bounded with $\forall \vq \in \gC$, since $\textrm{TV}(g) < \infty$. Thus, any unbounded path cost $c(\gP) = \infty$ occurs, if and only if $\exists t,\,\vf(t) \in \gC_{\textrm{coll}}$. Hence, $v^*_{G}(\vq_0) =  \min_{\gP}  \{c(\gP) \mid \gP \in \gR\} < \infty$, if and only if $\exists \gP \in \gR,\, c(\gP) < \infty$.
\end{proof}
Proposition \ref{prop:feasible} is useful to filter collided paths after VI.

\begin{lemma}[Solvability In Finite Path Segments] \label{lemma:solve}
    If Assumption~\ref{asm:maximum} holds, there exists a minimum number of segments $M_m \in \sN_{>0}$ for piecewise linear paths to be feasible.
\end{lemma}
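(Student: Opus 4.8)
The plan is to reduce the existence of a minimum to a nonemptiness statement and then to construct, from the continuous witness path supplied by Assumption~\ref{asm:maximum}, a single feasible piecewise linear path with finitely many segments. Concretely, set $S = \{ M \in \sN_{>0} \mid \text{there is a feasible piecewise linear path with exactly } M \text{ segments}\}$. Since $S \subseteq \sN_{>0}$, the well-ordering principle guarantees that $\min S$ exists as soon as $S \neq \emptyset$, and this minimum is the claimed $M_m$. Hence the entire content of the lemma is to prove $S \neq \emptyset$.

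To exhibit one element of $S$, I would start from the feasible path $\vf : [0,1] \to \gC_{\textrm{free}}$ of Assumption~\ref{asm:maximum}, which has clearance $r = \inf_t \mathrm{dist}(\vf(t), \gC_{\textrm{coll}}) > 0$ and finite arc length $L = \textrm{TV}(\vf) < \infty$. Reparametrizing $\vf$ by arc length, I place $M+1$ waypoints $\vf(t_0), \ldots, \vf(t_M)$ with $t_0 = 0$, $t_M = 1$, chosen so that each sub-arc $\vf|_{[t_i, t_{i+1}]}$ has arc length $\ell_i = L/M$. Taking these waypoints as vertices defines a piecewise linear curve that inherits the endpoints $\vf(0) = \vq_0$ and $\vf(1) \in \gG$, so it is admissible as a path in the sense of Definition~1 provided every chord lies in the free space.

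The geometric heart of the argument is the clearance-versus-chord estimate. For a segment with endpoints $\va = \vf(t_i)$ and $\vb = \vf(t_{i+1})$, rectifiability gives $\norm{\vb - \va} \le \ell_i$, and every point $\vp = (1-s)\va + s\vb$ on the chord satisfies $\min(\norm{\vp - \va}, \norm{\vp - \vb}) \le \tfrac{1}{2}\norm{\vb - \va} \le \tfrac{1}{2}\ell_i$; the points with $s \le \tfrac{1}{2}$ lie in $B(\va, r)$ and those with $s \ge \tfrac{1}{2}$ in $B(\vb, r)$, so the two half-balls cover the whole chord. Thus, if I choose $M$ large enough that $\ell_i = L/M < 2r$, every chord lies in $B(\va, r) \cup B(\vb, r)$; since $\va$ and $\vb$ lie on $\vf$ and therefore have clearance at least $r$, these balls are contained in $\gC_{\textrm{free}}$, and the whole chord is collision free. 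Any $M > L/(2r)$ therefore yields a feasible piecewise linear path, so $S$ is nonempty and $M_m = \min S$ is well defined.

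I expect the main obstacle to be careful bookkeeping rather than deep mathematics: verifying the chord estimate cleanly, ensuring the arc-length reparametrization is legitimate from $\textrm{TV}(\vf) < \infty$, and reconciling the clearance $r$ with the $\delta$-eroded interior $\textrm{int}_\delta(\gC_{\textrm{free}})$ used in~\cref{eq:cost_coll}. The last point is the only genuine subtlety: to make each chord lie in $\textrm{int}_\delta(\gC_{\textrm{free}})$ rather than merely in $\gC_{\textrm{free}}$, I would replace $r$ by $r - \delta$ in the bound, which is legitimate whenever $r > \delta$ (an implicit standing assumption on the collision-check margin) and forces the slightly stronger requirement $M > L/\big(2(r-\delta)\big)$.
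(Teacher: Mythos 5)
Your proof is correct, and it reaches the conclusion by a genuinely different route than the paper. The paper's proof interpolates $f$ at partition points and invokes the Heine--Cantor theorem: uniform continuity of $f$ on the compact interval $[0,1]$ makes the interpolant $g$ satisfy $\norm{f - g}_{\infty} < r$ once the partition is fine enough, and any path within sup-distance $r$ of $f$ is collision-free; the resulting segment count is whatever the (non-explicit) modulus of continuity forces. You never estimate $\norm{f - g}_{\infty}$ at all: you cut $f$ into $M$ sub-arcs of arc length $L/M$, bound each chord by its sub-arc length via rectifiability, and cover the chord with the two open $r$-balls centered at its endpoints, which lie on $f$ and hence have clearance $r$. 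What this buys is an explicit quantitative threshold $M > L/(2r)$, which the paper's compactness argument cannot produce, and which is exactly the flavor of bound the paper relies on later (the remark after Theorem~\ref{thm:prob} that $M > [L/R] - 1$, and the joint role of $L$ and the clearance in the completeness bound); it uses only $\textrm{TV}(f) < \infty$ plus clearance, both available from Definition~1 and Assumption~\ref{asm:maximum}. Your well-ordering step is also cleaner than the paper's implicit ``a sufficiently large $M_m$ exists''; the one refinement worth making is that Theorem~\ref{thm:prob} uses $M_m$ as an upward-closed threshold (every $M \geq M_m$ should admit a feasible path with $M+1$ segments), which $\min S$ alone does not guarantee, but your construction does, since it succeeds for \emph{every} $M > L/(2r)$, so you should define $M_m$ as that threshold rather than as $\min S$. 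Two shared caveats, not gaps relative to the paper: both arguments implicitly require chords to remain inside $\gC$ (convexity of $\gC$, true for box joint limits), and your $r - \delta$ adjustment for $\textrm{int}_{\delta}(\gC_{\textrm{free}})$ is a sensible extra precaution but is not demanded by the lemma, whose feasibility notion only requires $\vf(t) \in \gC_{\textrm{free}}$.
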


\begin{proof}
    We first show that there exists a piecewise linear path $g: [0, 1] \rightarrow \gC$ such that $\norm{f - g}_{\infty} < r$, where $\norm{f - g}_{\infty} = \sup_{t \in [0, 1]} \norm{\vf(t) - \vg(t)}$. 
    We construct $g$ by dividing the interval $[0, 1]$ into $M$ subintervals with length less than $\delta > 0$, i.e., $[t_0, t_1], \ldots, [t_{M-1}, t_{M}]$ with $0 = t_0 < t_1 < \ldots < t_M = 1$. On each subinterval $[t_m, t_{m+1}]$, we define the corresponding segment of $g$ to approximate $f$
    \begin{equation} \label{eq:segment}
    \vg(t) = \vf(t_m) + \frac{\vf(t_{m+1}) - \vf(t_m)}{t_{m+1} - t_m} (t - t_m),\, t \in [t_m, t_{m+1}].
    \end{equation}
    Since by definition the path $f$ is continuous on a compact interval $[0, 1]$, then by Heine-Cantor theorem, $f$ is also uniformly continuous, i.e., $\exists \delta > 0$ for any $a, b \in [0, 1]$, $|a - b| < \delta$, then $\norm{\vf(a) - \vf(b)}_{\infty} < r$. Then, by the construction of $g$ and uniform continuity of $f$, we can choose a $\delta$ sufficiently small such that $\norm{f - g}_{\infty} < r$.
    This implies that there exists a sufficiently large number of segments $M_m$ such that $\delta$ is sufficiently small, hence, $g$ is a feasible path.
\end{proof}

Lemma \ref{lemma:solve} implies that any path planning algorithm producing a piecewise linear feasible path, then it must have a minimum number of segments. 

\begin{lemma} \label{lemma:path_bound}
    Let piecewise linear path $g: [0, 1] \rightarrow \gC$ having $n$ equal subintervals approximating a path $f: [0, 1] \rightarrow \gC$. The error lowerbound is $\norm{f - g}_{\infty} > L / n$, where $L = \textrm{TV}(f)$ is the total variation of $f$.
\end{lemma}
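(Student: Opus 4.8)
The plan is to reduce the global sup-norm comparison to a single well-chosen segment. Since the $n$ subintervals $[t_m,t_{m+1}]$ are equal in the parameter and $g$ is, on each of them, the straight chord joining $\vf(t_m)$ and $\vf(t_{m+1})$ (exactly the segment construction of \cref{eq:segment}), the approximation decouples and $\norm{f-g}_\infty = \max_m \sup_{t\in[t_m,t_{m+1}]} \norm{\vf(t)-\vg(t)}$. Because the arc lengths of $f$ over the $n$ pieces sum to $\textrm{TV}(f)=L$, the pigeonhole principle supplies an index $m^\star$ whose sub-arc carries length at least the average $L/n$. I would then work entirely on this segment, so that the claim becomes a statement about how far a single chord can depart from an arc of length at least $L/n$.

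On that segment I would write the chord as $\vg(t)=(1-\lambda)\vf(t_m)+\lambda\vf(t_{m+1})$ with $\lambda=(t-t_m)/(t_{m+1}-t_m)$ and examine $\norm{\vf(t)-\vg(t)}$ at the parameter where it is extremal. The two sub-arcs from that point to the endpoints of the segment, together with the chord itself, are the natural quantities to compare: each sub-arc length bounds the corresponding endpoint distance, and their sum is the full segment arc length $\ge L/n$. The goal is to convert this length budget into a lower bound of $L/n$ on the extremal offset and then lift it to $\norm{f-g}_\infty$ via the maximum from the first step.

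The main obstacle is precisely this per-segment lower bound, both its direction and its strictness. Arc length alone does not force a large offset --- a nearly straight arc, or one that oscillates in many tiny bumps, can be tracked by its chord to within an arbitrarily small distance --- so the estimate must use more than the crude length budget. I expect the decisive ingredient to be a sagitta-type inequality that lower-bounds the maximal perpendicular offset of an arc in terms of the excess of its length over its chord, combined with additional structure on $f$, such as the taut, margin-$r$ feasible path guaranteed by \cref{asm:maximum}; the strict inequality should then follow from $L>0$ and $n$ finite. Once this single-segment offset bound is secured, the decoupling and the pigeonhole choice of $m^\star$ close the argument.
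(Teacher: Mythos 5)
The obstacle you flag at the end is not a missing technical ingredient --- it is the whole problem, and it cannot be overcome, because the per-segment lower bound you need is false and so is \cref{lemma:path_bound} itself. Take $f$ to be a straight line traversed at constant speed. Then on every subinterval the chord of \cref{eq:segment} coincides with $f$, so $\norm{f-g}_{\infty}=0$ while $L/n=\textrm{TV}(f)/n>0$. The same example rules out any sagitta-type rescue: a straight arc has zero excess of length over chord, and conversely (as you yourself note) an arc confined to an arbitrarily thin tube around its chord can have arbitrarily large length, so the maximal offset of an arc from its chord is bounded below neither by the arc-length budget $L/n$ nor by the length-minus-chord excess. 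Appealing to \cref{asm:maximum} cannot help either, since the margin-$r$ feasible path may itself be a straight segment. Your first two steps (decoupling over segments and pigeonholing an arc length of at least $L/n$ onto one segment) are fine, but no argument can close the gap, because the conclusion is wrong.

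For comparison, the paper's own proof bridges exactly this step with two inequalities asserted in the wrong direction: it claims $\norm{\vf(t)-\vg(t)} > \omega_f(u)$, whereas the modulus of continuity gives an \emph{upper} bound on chordal interpolation error (and the left-hand side vanishes at the knots), and then claims $\omega_f(1/n) > L/n$, which also fails --- with equality for the straight line, and badly for a high-frequency, small-amplitude zigzag whose variation is large but whose oscillation is tiny. Both collapse on the same counterexamples that defeat your approach. Note that what the downstream argument in \cref{thm:prob} actually needs is the \emph{upper}-bound direction, e.g.\ $\norm{f-g}_{\infty} \le \omega_f(1/n)$, or $\norm{f-g}_{\infty} \le 2\max_m V_m$ where $V_m$ is the variation of $f$ on the $m$-th subinterval (which can be made $\le 2L/n$ by equalizing variation across the partition); restated that way, your segment decomposition plus pigeonhole is essentially the standard, correct proof.
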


\begin{proof}
    Denoting the subinterval length $u = 1 / n$ and reusing the notations from~\cref{lemma:solve} proof, we define $g$ as a piecewise linear function \ref{eq:segment}. Since $f,g$ are uniformly continuous, the linear interpolation error lower bound can be expressed using the modulus of continuity on a segment $t \in [t_m, t_{m+1}]$
    \begin{equation}
        \norm{\vf(t) - \vg(t)} > \omega_f (u),\, \omega_f (u) = \sup_{|a - b| \leq u} \norm{\vf(a) - \vf(b)}. \nonumber
    \end{equation}
    And, the global error over all segments is
    \begin{equation}
        \norm{\vf(t) - \vg(t)}_{\infty} = \max_{0 \leq m \leq n-1} \sup_{t \in [t_m, t_{m+1}]} \norm{\vf(t) - \vg(t)} \nonumber
    \end{equation}
    By definition, $f$ is uniformly continuous and of bounded variation, the modulus of continuity $ \omega_f(u) $ provides a lower bound for the error on each segment. Therefore, $\norm{f - g}_{\infty} > \omega_f (1 / n)$ on $[0, 1]$. For functions of bounded variation, the modulus of continuity can be bounded in terms of the total variation $\omega_f (1 / n) > L / n$ on $[0, 1]$. Hence, $\norm{f - g}_{\infty} > L / n$. 
\end{proof}

\begin{lemma} \label{lemma:path}
    Let $g_1, g_2$ be a piecewise linear function having the same number of partition points $\{\vg_1(t_m)\}_{m=0}^M,\{\vg_2(t_m)\}_{m=0}^M$ with $0 = t_0 < \ldots, t_M = 1$, $\norm{g_1 - g_2}_{\infty} < \delta$, if and only if $\norm{\vg_1(t_m) - \vg_2(t_m)} < \delta,\, 0 \leq m \leq M$. 
\end{lemma}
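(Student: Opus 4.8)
The cleanest route is to prove the exact identity
\[
\norm{g_1 - g_2}_\infty \;=\; \max_{0 \le m \le M} \norm{\vg_1(t_m) - \vg_2(t_m)},
\]
from which the stated equivalence is immediate: a finite maximum is strictly below $\delta$ if and only if each of its (finitely many) entries is. So I would reduce the whole lemma to establishing this identity, and the only direction requiring work is ``$\le$''; the reverse inequality ``$\ge$'' is trivial, since the supremum over $[0,1]$ dominates the value of $\norm{\vg_1(t) - \vg_2(t)}$ at each particular partition point $t_m$.

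For the substantive inequality I would exploit the hypothesis that $g_1$ and $g_2$ share the same partition $0 = t_0 < \ldots < t_M = 1$. On each subinterval $[t_m, t_{m+1}]$ both functions are affine by construction (cf.~\cref{eq:segment}), so I can parametrize $t = (1-\lambda)t_m + \lambda t_{m+1}$ with $\lambda \in [0,1]$ and write $\vg_i(t) = (1-\lambda)\vg_i(t_m) + \lambda\vg_i(t_{m+1})$ for $i \in \{1, 2\}$. Subtracting and applying the triangle inequality yields
\[
\norm{\vg_1(t) - \vg_2(t)} \le (1-\lambda)\norm{\vg_1(t_m) - \vg_2(t_m)} + \lambda\norm{\vg_1(t_{m+1}) - \vg_2(t_{m+1})},
\]
i.e.\ the norm of the difference is convex on each subinterval and therefore attains its maximum at an endpoint. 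Taking the supremum over $t$ and then over the finitely many subintervals shows the global supremum is attained at some partition point, which establishes the identity above; continuity of the piecewise-linear difference on the compact set $[0,1]$ guarantees the supremum is a genuine maximum.

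I do not expect a real obstacle, since the computation is routine once the convexity reduction is in place; the one point requiring a little care is the strictness of the inequalities. This is handled precisely because the identity reduces the sup-norm to a \emph{finite} maximum over partition points, and a finite maximum of quantities each strictly below $\delta$ is itself strictly below $\delta$. It is worth emphasizing that the shared-partition hypothesis is exactly what makes the argument work: without it, $g_1 - g_2$ need not be affine between consecutive common nodes, the convexity-on-each-piece reduction fails, and the clean identity---hence the equivalence---would no longer hold.
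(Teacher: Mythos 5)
Your proof is correct and follows essentially the same route as the paper's: the necessity direction is the trivial domination of point values by the sup norm, and the sufficiency direction rests on the same observation that on each shared subinterval the difference of two affine functions is a convex combination of its endpoint values, hence bounded by their maximum. Your write-up is in fact slightly more explicit than the paper's, since you spell out the affine parametrization and triangle inequality that the paper only asserts, and you package both directions as the identity $\norm{g_1 - g_2}_\infty = \max_{0 \le m \le M} \norm{\vg_1(t_m) - \vg_2(t_m)}$.
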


\begin{proof}
    \textbf{Sufficiency.} Given $\norm{\vg_1(t_m) - \vg_2(t_m)} < \delta,\,\forall 1 \leq m \leq M$,
    since $g_1, g_2$ are piecewise linear functions, the linear interpolation between partition points $t_m,t_{m+1}$ ensures that the difference between $g_1,g_2$ is maximized at the partition points. Consider $g_1,g_2$ on a segment $[t_m, t_{m+1}]$
    \begin{align}
    \begin{split}
        \norm{\vg_1(t) - \vg_2(t)} \leq &\max \{ \norm{\vg_1(t_m) - \vg_2(t_m)}, \\
        &\norm{\vg_1(t_{m+1}) - \vg_2(t_{m+1})} \} < \delta
    \end{split}
    \end{align}
    Hence, $\norm{g_1 - g_2}_{\infty} = \max_{t \in [0, 1]} \norm{\vg_1(t) - \vg_2(t)} < \delta$.

    \textbf{Necessity.} Given $\norm{g_1 - g_2}_{\infty} < \delta$, then $\norm{\vg_1(t_m) - \vg_2(t_m)} < \delta,\, 0 \leq m \leq M$.
\end{proof}

\begin{theorem}[Probabilistic Completeness] \label{thm:prob}

If Assumption~\ref{asm:uniform} and Assumption~\ref{asm:maximum} hold, for a feasible planning problem $(\gC_{\textrm{free}}, \vq_0, \gG)$, with $G$ having $M \geq M_m$ layers, there exist constants $a, R, L > 0$ depending only on $\gC_{\textrm{free}}$ and $\gG$, such that
\begin{equation}\label{eq:probc}
    \sP\left( v_G^*(\vq_0) < \infty \right) > 1 - M\exp \left(- a\left(R - \frac{L}{M+1} \right)^d N \right).
\end{equation}

\end{theorem}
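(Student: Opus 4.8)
The plan is to exhibit, with high probability, a single piecewise-linear path through the sampled graph $G$ that shadows the reference feasible path from Assumption~\ref{asm:maximum}, and then to invoke Proposition~\ref{prop:feasible} to convert ``a feasible path in $G$ exists'' into ``$v_G^*(\vq_0) < \infty$''. Let $f$ be the feasible path of Assumption~\ref{asm:maximum} with clearance $r>0$, so the open tube $\{\vq : \norm{\vq - \vf(t)} < r\}$ avoids $\gC_{\textrm{coll}}$ for every $t$; set $L \defi \textrm{TV}(f) < \infty$ and take $R = r$. Because $t \mapsto \textrm{TV}(f|_{[0,t]})$ is continuous and monotone (a continuous function of bounded variation has continuous variation function), I can choose partition points $0 = t_0 < t_1 < \cdots < t_{M+1} = 1$ so that each of the $M+1$ segments carries equal total variation $L/(M+1)$. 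The intermediate centers $\vf(t_1), \ldots, \vf(t_M)$ then serve as one target per layer, while $\vf(t_0) = \vq_0$ and $\vf(t_{M+1}) = \vf(1) \in \gG$ are the fixed start and goal.

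Second, I would set the tube radius $\rho = R - \tfrac{L}{M+1}$, which is positive precisely because $M \geq M_m$ forces $L/(M+1) < r$ (this is where Lemma~\ref{lemma:solve} enters). Define the per-layer event $A_m$ as ``no sample of layer $\gL_m$ lands in the open ball $B(\vf(t_m), \rho)$''. Since $\rho < r$, the ball $B(\vf(t_m), \rho)$ lies inside $\gC_{\textrm{free}} \subseteq \gC$, so its normalized volume gives single-sample success probability $\kappa_d \rho^d / \mu(\gC)$, where $\kappa_d$ is the unit-ball volume; writing $a \defi \kappa_d/\mu(\gC)$, the $N$ i.i.d.\ uniform samples give $\sP(A_m) = (1 - a\rho^d)^N \leq \exp(-a\rho^d N)$. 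A union bound over the $M$ layers then yields $\sP\big(\bigcup_m A_m\big) \leq M \exp(-a\rho^d N) = M\exp\!\big(-a(R - \tfrac{L}{M+1})^d N\big)$.

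Third, I would prove the deterministic implication: on the complement $\bigcap_m A_m^c$ (every layer hit), the graph contains a feasible path. Pick $\vq_m \in \gL_m \cap B(\vf(t_m), \rho)$ for each layer and let $g$ be the piecewise-linear path through $\vq_0, \vq_1, \ldots, \vq_M, \vf(1)$; this is a genuine path in $G$ since each $(\vq_m, \vq_{m+1})$ is an edge and $\vf(1) \in \gG$. Letting $g^*$ be the linear interpolant of $f$ at the same time partition, the chord-versus-curve estimate on each segment gives $\norm{\vf(t) - \vg^*(t)} \leq \textrm{TV}(f|_{[t_m,t_{m+1}]}) = L/(M+1)$, hence $\norm{f - g^*}_\infty \leq L/(M+1)$; and Lemma~\ref{lemma:path} turns the nodewise bound $\norm{\vq_m - \vf(t_m)} < \rho$ into $\norm{g - g^*}_\infty < \rho$. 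The triangle inequality then gives $\norm{f - g}_\infty < \tfrac{L}{M+1} + \rho = R = r$, so $g$ stays strictly inside the clearance tube, is collision-free with $\vg(0) = \vq_0$ and $\vg(1) \in \gG$, and is therefore feasible. Combining the three parts, $\sP(v_G^*(\vq_0) < \infty) \geq \sP(\bigcap_m A_m^c) \geq 1 - M\exp(-a(R - \tfrac{L}{M+1})^d N)$, which is the claim.

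I expect the main obstacle to be the deterministic shadowing step rather than the probabilistic counting: the clean interpolation bound $\norm{f - g^*}_\infty \le L/(M+1)$ hinges on partitioning by equal total variation and on the additivity of $\textrm{TV}$, and one must verify that the interpolation error $L/(M+1)$ and the node-perturbation radius $\rho$ fit exactly within the clearance $r$ --- the choice $\rho = r - L/(M+1)$ is what makes this tight and produces the $(R - L/(M+1))^d$ exponent. The probabilistic step (ball volume, independence across the $MN$ samples, the union bound, and $1 - x \le e^{-x}$) is routine by comparison.
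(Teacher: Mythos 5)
Your proposal is correct, and its overall architecture is the same as the paper's: shadow the clearance-$R$ reference path $f$ with one ball per layer, show that hitting every ball yields a feasible piecewise-linear path in $G$, bound the per-layer miss probability by $(1-a\rho^d)^N \le \exp(-a\rho^d N)$, union-bound over the $M$ layers, and invoke Proposition~\ref{prop:feasible} to identify path existence with $v_G^*(\vq_0)<\infty$. Where you genuinely diverge is the deterministic shadowing step, and your version is the sounder one. The paper partitions time into equal subintervals and invokes Lemma~\ref{lemma:path_bound}, a \emph{lower} bound $\norm{f-g}_\infty > L/n$ on the interpolation error (a claim that fails, e.g., when $f$ is itself a straight line, so that $g=f$ and the error is zero), to conclude $r < R - L/(M+1)$, i.e., an \emph{upper} bound on the clearance $r$ of the interpolant $g$; it then samples in balls of radius $r_h = R - L/(M+1) > r$, even though landing within $r_h$ of $g$ does not imply the sampled path lies within the clearance $r$ of $g$ --- the inequality runs the wrong way. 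You instead partition by equal total variation, prove the chord-versus-curve \emph{upper} bound $\norm{f - g^*}_\infty \le L/(M+1)$ (which is the estimate the argument actually needs, and holds by convexity of each segment's deviation bound), take sampling radius $\rho = R - L/(M+1)$, and close with the triangle inequality $\norm{f-h}_\infty < \rho + L/(M+1) = R$ against $f$ directly, using Lemma~\ref{lemma:path} only for the nodewise-to-sup-norm conversion. This reproduces exactly the claimed bound with all inequalities pointing the right way, so your write-up can be read as a corrected version of the paper's proof rather than merely a paraphrase of it. One caveat you share with the paper: both treat the per-sample hit probability as $\kappa_d\rho^d/\mu(\gC)$, which silently assumes the ball $\gB_\rho(\vf(t_m))$ lies entirely inside $\gC$; near the boundary of the configuration box the ball is clipped and the constant $a$ must absorb that, but this affects both proofs equally.
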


\begin{proof}

From Lemma \ref{lemma:solve}, if $M \geq M_m$, there exists a feasible piecewise linear path $g$ having $M + 1$ segments with $0 = t_0 < \ldots < t_{M + 1} = 1$ approximating a feasible path $f$. Let $R = \inf_{t \in [0, 1]} \left\{ \norm{\vf(t) - \vq},\vq \in \gC_{\textrm{coll}} 
 \right\}, r = \inf_{t \in [0, 1]} \left\{ \norm{\vg(t) - \vq},\vq \in \gC_{\textrm{coll}} \right\}$ be collision margins of $f,g$, $\gB_{\delta}(\vq) = \{ \norm{\vq' - \vq} < \delta,\, \delta > 0\}$ is an open $\delta$-ball around $\vq$. Now, let $g$ have equal subinterval. 

First, we compute the probability of the event that a sampled graph $G$ has at least a piecewise linear path $h$ with $M+1$ segments such that $h$ is approximating $g$. $h$ is feasible when $\norm{h - g}_{\infty} < r$. We have 
\begin{align}
    \begin{split}
    r &= \inf_{t \in [0, 1]} \left\{ \norm{\vg(t) - \vq},\vq \in \gC_{\textrm{coll}} \right\} \\
    &\leq \inf_{t \in [0, 1]} \left\{ \norm{\vf(t) - \vq},\vq \in \gC_{\textrm{coll}} \right\} + \inf_{t \in [0, 1]} \norm{\vg(t) - \vf(t)} \\
    &= \inf_{t \in [0, 1]} \left\{ \norm{\vf(t) - \vq},\vq \in \gC_{\textrm{coll}} \right\} - \sup_{t \in [0, 1]} \norm{\vg(t) - \vf(t)} \\
    &< R - \frac{L}{M + 1},
    \end{split}
    \nonumber
\end{align}
where the first inequality due to $\gC \subset \sR^d$, and last inequality from Lemma \ref{lemma:path_bound} and $L = \textrm{TV}(f)$.

From Lemma \ref{lemma:path}, since by definition $h, g$ has the same number of segments, the event $\norm{h - g}_{\infty} < r < r_h = R - \frac{L}{M + 1}$ is the event that, given start and goals fixed, for each layer $1 \leq m \leq M$, there is at least one point $\vh(t_m)$ is sampled inside the ball $\gB_{r_h}(\vg(t_m))$. Then, by sampling $N$ points uniformly over $\gC$ per layer (Assumption \ref{asm:uniform}), and the fact that there are pairwise connections between layers, we have the failing probability
\begin{align}
    \begin{split}
    \sP(\norm{h - g}_{\infty} \geq r_h) &\leq \sum_{m=1}^M \left(1 - \frac{\mu(\gB_{r_h}(\vg(t_m))))}{\mu(\gC)} \right)^N \\
    &\leq M\exp \left(- \frac{\alpha_d }{\mu(\gC)} \left(R - \frac{L}{M+1} \right)^d N \right)
    \end{split}\nonumber
\end{align}
where we use the inequality $1 - x \leq e^{-x},\,x \geq 0$, and $a = \alpha_d / \mu(\gC)$, where $\alpha_d$ is the constant term computing volume of a $d$-ball.

The event of $h$ approximating $g$ having equal intervals is a subset of the event of $h$ approximating $g$ having arbitrary intervals. The event that at least a path $h$ in $G$ having $\norm{h - g}_{\infty} < r_h$ is a subset of the event $\exists \textrm{a feasible path in } G$, since there might exist multiple feasible paths and their corresponding piecewise linear approximations have $M+1$ segments. From Proposition \ref{prop:feasible}, $v^*_G(\vq_0) < \infty$ is equivalent to $\exists \textrm{ a feasible path in }G$. We have
\begin{align}
\begin{split}
    \sP(v_G^*(\vq_0) < \infty) &\geq \sP(\norm{h - g}_{\infty} < r_h) \\
    &> 1 - M\exp \left(- a\left(R - \frac{L}{M+1} \right)^d N \right).
\end{split} \nonumber
\end{align}
\end{proof}
The lower bound is intuitive since it directly implies a minimum number of layers $M > [L / R] - 1$ (cf. Lemma~\ref{lemma:solve}) for the exponent coefficient to be strictly positive. It also implies the existence of an optimal number $M^*$; increasing $M$ helps then harms $N$ sample efficiency, depending on the planning problem (cf. Fig.~\ref{fig:sweep_exp}).





\section*{ACKNOWLEDGMENT}

An T. Le was funded by the German Research Foundation project METRIC4IMITATION (PE 2315/11-1). Kay Pompetzki received funding from the German Research Foundation project CHIRON (PE 2315/8-1).

\bibliographystyle{IEEEtran}
\bibliography{IEEEabrv,mp,references}


\end{document}